\documentclass[letterpaper]{article} 
\usepackage[preprint]{aaai2027}  
\usepackage[hyphens]{url}  
\usepackage{graphicx} 
\usepackage{natbib}  
\usepackage{caption} 
\usepackage{algorithm}
\usepackage{algorithmic}
\usepackage{amsmath}
\usepackage{booktabs}
\usepackage{multirow}
\usepackage[table]{xcolor}
\usepackage{tcolorbox}
\tcbuselibrary{breakable}

\newcommand{\method}{ICSD}
\newcommand{\methodfull}{Influence Calibration for Self-Distillation}
\newcommand{\legacyicsd}{\method{}}
\newtheorem{proposition}{Proposition}
\newcommand{\gigpo}{GiGPO}
\newcommand{\opsd}{OPSD}
\definecolor{tablegroup}{gray}{0.93}
\definecolor{sdarrow}{RGB}{250,239,230}
\definecolor{oidrow}{RGB}{232,241,250}
\definecolor{chalframe}{RGB}{46,111,172}
\newtcolorbox{challengebox}{breakable,colback=oidrow!55!white,colframe=chalframe!45!white,boxrule=0.45pt,arc=1.5mm,left=4pt,right=4pt,top=3pt,bottom=3pt}

\title{Trust Is Not Enough: Influence Calibration for On-Policy Self-Distillation in Agentic RL}
\author{
Qizhen Lan\textsuperscript{\rm 1}\equalcontrib,
Xi Xiao\textsuperscript{\rm 2}\equalcontrib,
Xiangchen Guan\textsuperscript{\rm 3}\equalcontrib,
Mengchen Fan\textsuperscript{\rm 2},\\
Moule Lin\textsuperscript{\rm 4},
Jung Im Choi\textsuperscript{\rm 5},
Lijing Zhu\textsuperscript{\rm 6}\corresponding
}
\affiliations{
\textsuperscript{\rm 1}The University of Texas Health Science Center at Houston\\
\textsuperscript{\rm 2}University of Alabama at Birmingham\\
\textsuperscript{\rm 3}Amazon\\
\textsuperscript{\rm 4}Trinity College Dublin and Lero\\
\textsuperscript{\rm 5}The University of Findlay\\
\textsuperscript{\rm 6}University of Houston-Clear Lake\\
\texttt{Qizhen.Lan@uth.tmc.edu, Zhul@uhcl.edu}
}

\begin{document}

\maketitle

\begin{abstract}
On-policy self-distillation (OPSD) gives language agents dense token-level supervision from a privileged self-teacher on the policy's own trajectories. Existing methods allocate this supervision mainly by teacher trust, but trust does not reveal whether emphasizing a token supports the current policy objective. We call this the trust--utility mismatch and introduce \methodfull{} (\method{}). For each supervised token, \method{} measures the first-order response of its importance-weighted RL surrogate contribution to a teacher-directed output perturbation. Batch-adaptive calibration converts this non-stationary signal into a bounded allocation weight while preserving the original auxiliary-loss mass within each action turn. These detached weights affect only the distillation loss and require no additional model pass. Across ALFWorld, WebShop, and Search-QA, \method{} improves all matched aggregate metrics over trust-only allocation under Group Relative Policy Optimization (GRPO) and Group-in-Group Policy Optimization (GiGPO), across two model families spanning 1.5B to 7B. At 7B, it reaches 96.1\% ALFWorld success and a WebShop score of 93.1. Frozen-batch analyses show that \method{} reduces teacher-supported mass assigned to objective-opposed tokens from 60.1\% to 37.8\% and raises cosine compatibility with the RL gradient by 0.192.
A companion repository is available at \url{https://github.com/lanqz7766/Influence-Calibration-for-On-Policy-Self-Distillation-in-Agentic-RL}.
\end{abstract}

\begin{figure}[!t]
\centering
\includegraphics[width=0.80\columnwidth]{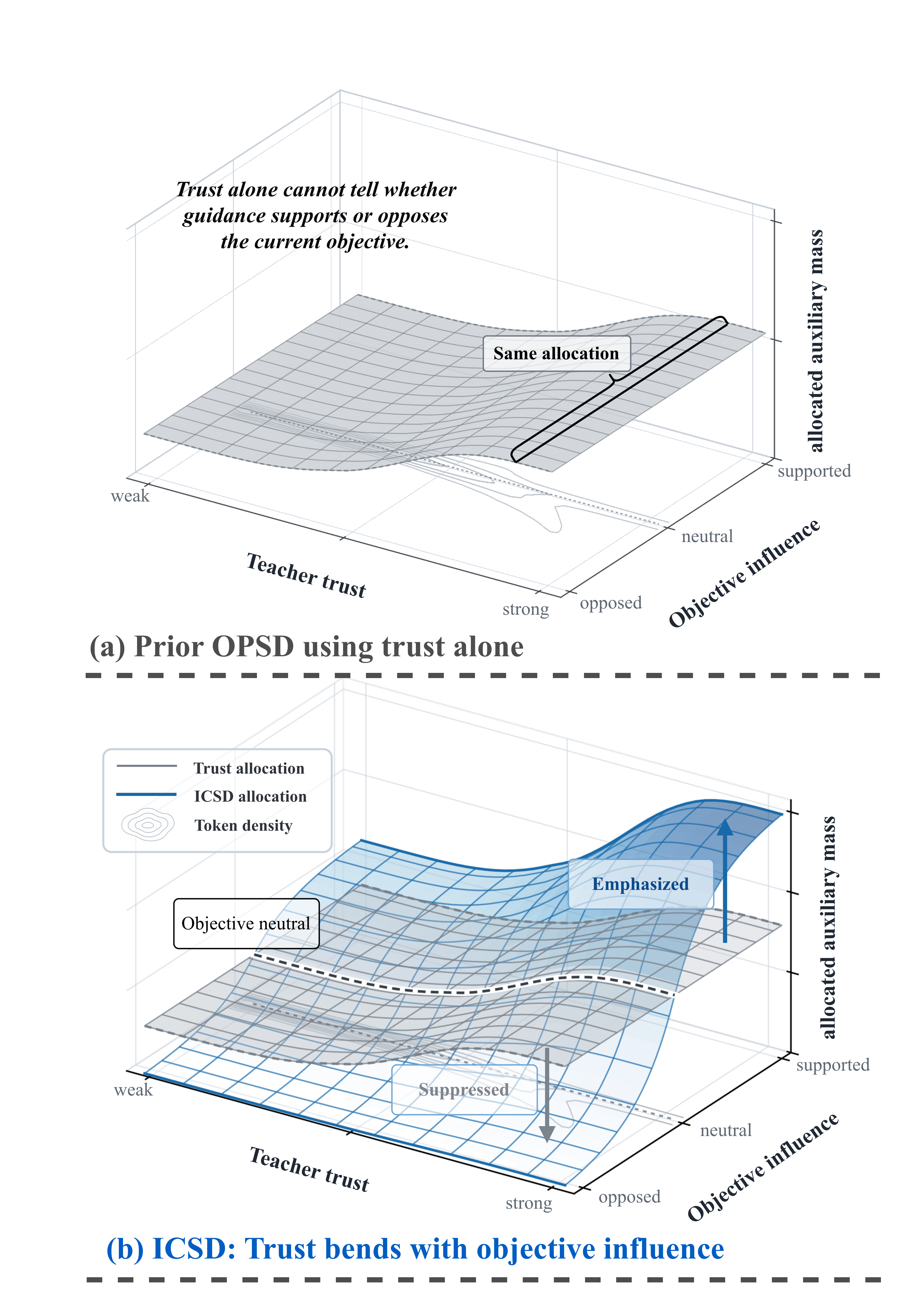}
\caption{Many existing OPSD allocation rules determine token weights using teacher trust alone. (a) Tokens with the same trust therefore receive the same coefficient, even when their influence on the RL objective differs. (b) \method{} retains the trust signal and uses objective influence to redistribute the same auxiliary mass.}
\label{fig:icsd_taste_intro}
\end{figure}

\section{Introduction}

Language agents are increasingly trained with on-policy reinforcement learning (RL) for long-horizon interaction. The reward carries the task objective but arrives once per trajectory, far more coarsely than the sequence of decisions that produced it: a delayed outcome rates the whole episode even when a few actions decide success. Recent work closes this granularity gap with dense auxiliary supervision from a privileged reference~\citep{wang2026skillsd,lu2026sdar}. The density has a price. An auxiliary distillation loss records what the reference prefers at each token, but by itself does not reveal whether moving toward that preference helps the objective currently being optimized.

On-policy self-distillation (OPSD) supplies exactly this kind of dense signal: it evaluates a teacher or privileged branch on the student's own rollouts and supervises tokens at states the current policy visits~\citep{zhao2026opsd}. Skill-conditioned variants extend this idea to multi-turn agents by exposing the teacher, but not the student, to trajectory-derived skills or other privileged context~\citep{wang2026skillsd,lu2026sdar}. This design reduces the distribution mismatch of offline teacher trajectories and turns sparse outcome feedback into token-level guidance. It also creates a different problem: once a rollout contains many teacher-supervised items, how should the auxiliary update be distributed among them?

Existing selection rules primarily estimate whether a teacher signal is informative or trustworthy from teacher--student agreement~\citep{lu2026sdar}, uncertainty and disagreement~\citep{xu2026tip}, token position~\citep{liu2026pwopsd}, or outcome evidence~\citep{akhondzadeh2026rgopd}. Useful as these quantities are, they do not determine the utility of an auxiliary update for the current RL objective. Two sampled tokens can receive equally strong teacher support while one sits in a turn the advantage estimator credits and the other in a turn it penalizes. We call this gap the \emph{trust--utility mismatch}: trust asks whether a correction is credible, utility asks whether moving toward it serves the current policy-improvement direction, and neither answers the other. On frozen ALFWorld batches, for example, trust-only allocation places 60.1\% of its teacher-supported mass on tokens the current objective opposes (Figure~\ref{fig:continuous_joint_allocation}).
We therefore condition trusted supervision on objective utility rather than replacing the teacher-trust signal.

Turning this comparison into an allocation rule raises three challenges:
\begin{challengebox}
\begin{itemize}
\setlength\itemsep{2pt}
\item \emph{Influence estimation.} Classical influence functions answer how upweighting one item changes an objective, but their parameter-space evaluation needs inverse-curvature information and per-item gradients, prohibitive during online training of billion-parameter agents~\citep{koh2017influence}.
\item \emph{Distribution calibration.} Local sensitivities inherit the drifting scale and heavy tails of advantages, ratios, and teacher gaps; a fixed threshold means different things across turns and training stages.
\item \emph{Signal validity.} A local signal is directional only where the deployed loss can realize the analyzed direction; elsewhere, sign agreement is not evidence, and the allocator must stay conservative.
\end{itemize}
\end{challengebox}

We address these challenges with \methodfull{} (\method{}), an objective-informed allocation rule for the existing OPSD auxiliary branch. \method{} obtains its influence signal from a teacher-directed perturbation of the sampled output. The score is the first-order Taylor term of the token's local RL-surrogate response and uses only quantities already available from the policy update. A batch-adaptive map converts these drifting, heavy-tailed scores into bounded relative multipliers that remain comparable across turns and training stages. Where the local interpretation is ambiguous, \method{} retains the inherited SDAR allocation~\citep{lu2026sdar}. Exact action-turn mass matching then ensures that influence redistributes trusted supervision without increasing its total amount.

Our contributions are:
\begin{itemize}
    \item We identify the trust--utility mismatch in OPSD and formulate token-level distillation as objective-informed allocation of a detached auxiliary update.
    \item We derive a teacher-directed objective-influence score and turn it into a continuous allocation rule through batch-adaptive calibration, a conservative SDAR fallback, and exact action-turn mass matching, without an additional model pass.
    \item Across three agent benchmarks, two policy optimization algorithms, and five model configurations, \method{} yields consistent aggregate gains over trust-only allocation, while frozen-batch analyses verify the predicted reallocation from objective-opposed to objective-supported teacher corrections.
\end{itemize}

\section{Related Work}

\paragraph{Privileged and Selective On-Policy Distillation.}
\opsd{} evaluates a stop-gradient privileged self-teacher on student rollouts~\citep{zhao2026opsd}. Skill-SD supplies trajectory-derived skills to the privileged branch, SDAR gates its loss by the teacher--student gap, and SAGE-OPD and TurnOPD account for where supervision occurs in a long interaction~\citep{wang2026skillsd,lu2026sdar,zhou2026sageopd,zhou2026turnopd}. Other selectors estimate trust from entropy and divergence, token position or prefix discrepancy, Bayesian evidence, or contrasted privileged hints~\citep{xu2026tip,liu2026pwopsd,xie2026iwopd,shen2026credit,pan2026rlcsd}; privileged supervision can also erase high-entropy forks and self-correction~\citep{kaur2026rethinking}. These methods determine whether and where teacher evidence is trustworthy. \method{} retains that signal but asks whether the resulting token update supports the local RL objective.

\paragraph{Coupling Distillation to the RL Objective.}
RL feedback enters prior distillation methods through several interfaces. RLSD, EGRSD, and PBSD modify RL advantages~\citep{yang2026rlsd,ke2026egrsd,tian2026pbsd}; SG-OPD uses a binary sequence-level gate, DOPD changes the supervision source and form, RG-OPD filters trajectories by reward--teacher agreement, and CRAFT adds an actor loss from sibling-rollout credit~\citep{xu2026sgopd,yu2026dopd,akhondzadeh2026rgopd,meng2026craft}. \method{} instead leaves the RL surrogate unchanged and uses its existing advantage and ratio only to allocate the detached \opsd{} loss per token. This output-coordinate view is related to gradient balancing~\citep{yu2020pcgrad,chen2018gradnorm,du2018auxiliary}, while influence functions motivate the item-upweighting question and Fisher geometry its idealized parameter-space interpretation~\citep{koh2017influence,schulman2015trpo}.

\section{Method}
\label{sec:method}

\method{} is defined directly on the privileged-distillation objective. Starting from its token-level form, we derive objective influence and use it to construct calibrated coefficients that preserve the inherited auxiliary mass. Figure~\ref{fig:icsd_method_overview} summarizes this allocation pipeline from privileged token scoring to the \method{}-weighted actor update.

\begin{figure*}[t]
\centering
\includegraphics[width=\textwidth]{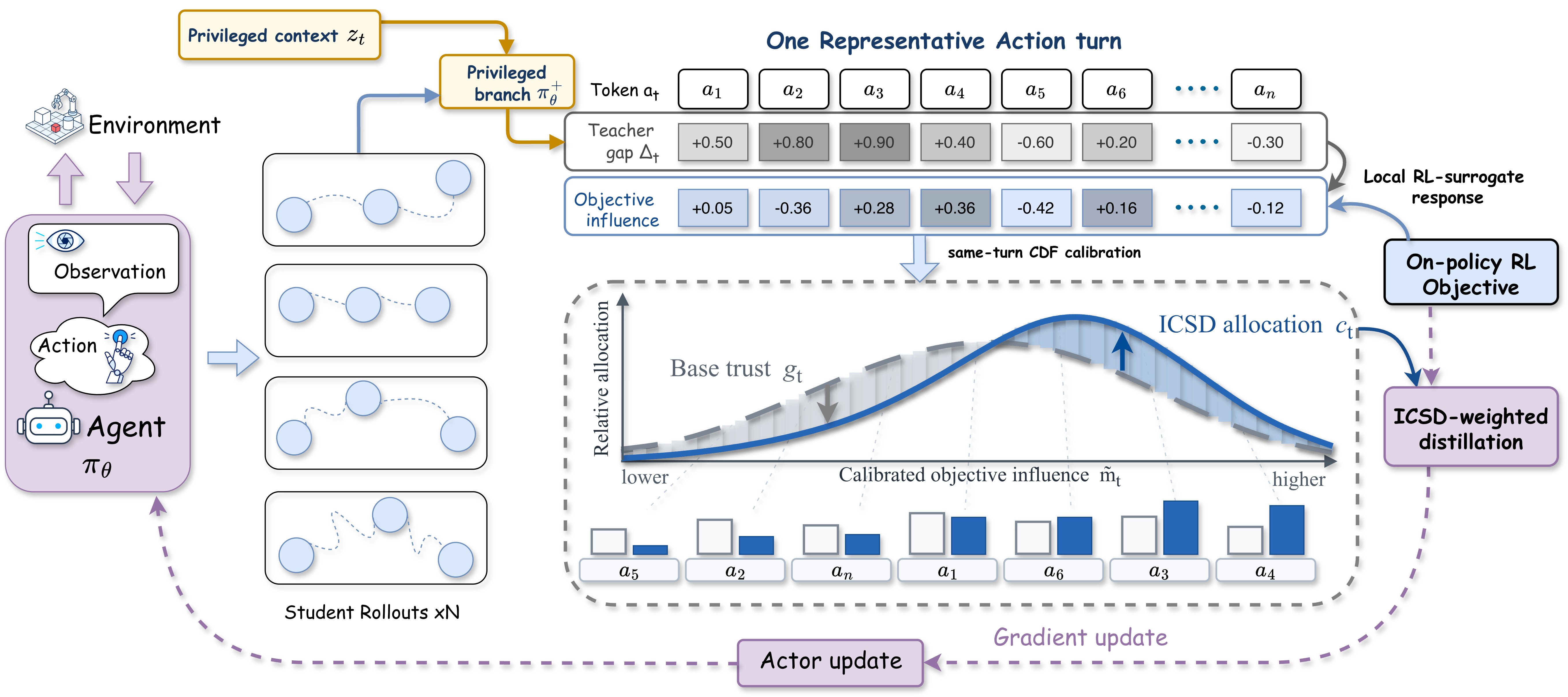}
\caption{Overview of \method{}. A privileged branch evaluates the student's on-policy action tokens, while the RL objective provides token-level objective influence. \method{} calibrates this influence and uses it to redistribute the inherited trust allocation before the actor update.}
\label{fig:icsd_method_overview}
\end{figure*}

\subsection{On-Policy Privileged Distillation}
\label{sec:method_setup}

Let $\mathcal{B}$ be the set of valid response-token positions in an on-policy minibatch. At position $t\in\mathcal{B}$, the student $\pi_\theta$ observes context $x_t$ and samples token $a_t$. The privileged branch $\pi_\theta^{+}(\cdot\mid x_t,z_t)$ evaluates that token with additional context $z_t$, such as a trajectory-derived skill, that is withheld from the student. Both branches share the evolving actor parameters. During the current actor update, however, the privileged log probability is treated as a constant. The per-token auxiliary loss is
\begin{equation}
\hat\ell^{\mathrm{KD}}_t(\theta)
=
\operatorname{sg}\!\bigl[\log \pi_\theta^{+}(a_t\mid x_t,z_t)\bigr]
-
\log \pi_\theta(a_t\mid x_t),
\label{eq:sampled_aux}
\end{equation}
where $\operatorname{sg}$ denotes stop-gradient. We also record the detached teacher--student gap
\begin{equation}
\Delta_t
=
\operatorname{sg}\!\bigl[
\log \pi_\theta^{+}(a_t\mid x_t,z_t)
-
\log \pi_\theta(a_t\mid x_t)
\bigr],
\label{eq:teacher_gap}
\end{equation}
A positive $\Delta_t$ means that the privileged branch assigns greater probability to the sampled token than the student does.

Uniform OPSD averages these token losses equally. More generally, let $c_t\geq 0$ be a detached coefficient that controls how much auxiliary weight token $t$ receives:
\begin{equation}
\mathcal{L}(\theta)
=
\mathcal{L}_{\mathrm{RL}}(\theta)
+
\frac{\lambda}{|\mathcal{B}|}
\sum_{t\in\mathcal{B}}
\operatorname{sg}(c_t)\,
\hat\ell^{\mathrm{KD}}_t(\theta).
\label{eq:allocation_form}
\end{equation}
Here $\lambda$ is the distillation weight and $|\mathcal{B}|$ is the number of valid response tokens in the minibatch. SDAR derives $c_t$ from $\Delta_t$, treating the sampled gap as evidence of whether the teacher signal should be trusted~\citep{lu2026sdar}. We denote this inherited trust weight by $g_t=G_{\mathrm{trust}}(\Delta_t)\in[0,1]$, where $G_{\mathrm{trust}}$ is monotone. In our experiments, $G_{\mathrm{trust}}$ is the same asymmetric Laplace CDF defined below, applied to $\Delta_t$ with split parameter $0.4$. ICSD leaves this SDAR component unchanged. Because $g_t$ depends only on $\Delta_t$, two tokens with the same gap receive the same trust weight. Their relation to the current RL objective is not represented.

\subsection{Teacher-Directed Objective Influence}
\label{sec:method_influence}

To add the missing objective-side information, we measure how the current RL surrogate responds when the privileged branch perturbs the sampled token. Let $\pi_{\mathrm{old}}$ be the rollout policy held fixed during the actor update. The policy ratio is $\rho_t=\pi_\theta(a_t\mid x_t)/\pi_{\mathrm{old}}(a_t\mid x_t)$, and $\widehat A_t$ is the token advantage supplied by the policy optimization algorithm. Token $t$ contributes $J_t=\widehat A_t\rho_t$ to the unclipped maximization surrogate.

Classical parameter-space influence requires applying inverse curvature to a per-item parameter gradient, typically through a linear solve or repeated Hessian--vector products~\citep{koh2017influence}. Repeating that computation for every supervised token inside each actor update is impractical for billion-parameter agents. We instead perturb the scalar output coordinate specified by the teacher--student gap:
\begin{equation}
\log\rho_t(\epsilon)
=
\log\rho_t+\epsilon\Delta_t,
\qquad
\rho_t(\epsilon)
=
\rho_t\exp(\epsilon\Delta_t).
\label{eq:intervention}
\end{equation}

The resulting teacher-directed objective influence is
\begin{align}
u_t
&:=
\left.
\frac{\mathrm d\,\widehat A_t\rho_t(\epsilon)}
     {\mathrm d\epsilon}
\right|_{\epsilon=0}
=
\widehat A_t\rho_t\Delta_t,
\label{eq:influence_score}
\\
J_t(\epsilon)
&=
J_t(0)+\epsilon u_t+\mathcal{O}(\epsilon^2).
\label{eq:influence_expansion}
\end{align}
Thus, $u_t$ is the exact first-order Taylor coefficient for the teacher-directed output-coordinate intervention in Equation~\eqref{eq:intervention}. For teacher-supported tokens ($\Delta_t>0$), its sign indicates whether increasing the sampled-token probability raises or lowers the local surrogate contribution. Its magnitude varies with the advantage and policy ratio, so it must be calibrated before being used for allocation.

\subsection{Calibrated Influence Allocation}
\label{sec:method_calibration}

\paragraph{Turn-conditioned calibration.}
The absolute scale of $u_t$ is not stable across minibatches because it inherits the scale of the advantage and policy ratio. Its product form can also yield heavy tails and unequal dispersion below and above the batch center. We therefore use the median to limit outlier leverage and fit separate scales on the two sides. Recall that $\mathcal{B}$ contains all valid response-token positions in the current minibatch. An action turn is the span emitted for one agent action in one trajectory. For each turn index, the calibration group $\mathcal{G}\subseteq\mathcal{B}$ pools valid tokens at that index across trajectories. Sparse groups use the statistics of all tokens in $\mathcal{B}$.

\paragraph{Calibration map.}
For each $\mathcal{G}$, we estimate a location and two one-sided scales:
\begin{equation}
\begin{aligned}
\hat\mu_{\mathcal{G}}
&=
\operatorname{median}_{j\in\mathcal{G}}u_j,\\
\hat b_{\mathcal{G}}^{-}
&=
\operatorname{mean}_{j\in\mathcal{G}:u_j<\hat\mu_{\mathcal{G}}}
\left|u_j-\hat\mu_{\mathcal{G}}\right|,\\
\hat b_{\mathcal{G}}^{+}
&=
\operatorname{mean}_{j\in\mathcal{G}:u_j\geq\hat\mu_{\mathcal{G}}}
\left|u_j-\hat\mu_{\mathcal{G}}\right|.
\end{aligned}
\label{eq:calibration_estimators}
\end{equation}
An empty side is assigned the scale $\varepsilon$, and every fitted scale is floored at the same small $\varepsilon>0$. The calibrated map is
\begin{equation}
\Phi_{\mathcal{G}}(u)
=
\begin{cases}
\eta\exp\!\left((u-\hat\mu_{\mathcal{G}})/\hat b_{\mathcal{G}}^{-}\right),
& u<\hat\mu_{\mathcal{G}},\\
1-(1-\eta)\exp\!\left(-(u-\hat\mu_{\mathcal{G}})/\hat b_{\mathcal{G}}^{+}\right),
& u\geq\hat\mu_{\mathcal{G}},
\end{cases}
\label{eq:cdf}
\end{equation}
Equation~\eqref{eq:cdf} is an asymmetric Laplace-family CDF with split parameter $\eta=0.5$. We set $\tilde m_t=\Phi_{\mathcal{G}}(u_t)\in(0,1)$. It is a bounded relative score within $\mathcal{G}$, not an event probability.

\paragraph{Conservative composition.}
The calibrated score is not meaningful under every sign pattern. The auxiliary loss in Equation~\eqref{eq:sampled_aux} raises the sampled log probability, while the intervention in Equation~\eqref{eq:intervention} points downward when $\Delta_t<0$. If $\widehat A_t<0$ as well, their product makes $u_t$ positive even though the two directions disagree. We retain the inherited SDAR coefficient in this case. Let $\mathcal{D}$ denote this fallback set and let $\mathcal{C}$ denote the remaining tokens:
\begin{equation}
\mathcal{D}
=
\{t\in\mathcal{B}:\widehat A_t<0,\ \Delta_t<0\},
\qquad
\mathcal{C}
=
\mathcal{B}\setminus\mathcal{D}.
\label{eq:fallback_sets}
\end{equation}
For mass matching, let $q\subseteq\mathcal{B}$ denote one action turn and let $\mathcal{Q}$ collect all turns in the minibatch. These turns form a disjoint partition, $\mathcal{B}=\biguplus_{q\in\mathcal{Q}}q$. For each $q\in\mathcal{Q}$, write $\mathcal{C}_q=\mathcal{C}\cap q$ and $\mathcal{D}_q=\mathcal{D}\cap q$. Thus, $\mathcal{G}$ pools a turn index across trajectories for calibration, whereas $q$ refers to one specific turn for mass matching. When $\sum_{j\in\mathcal{C}_q}g_j\tilde m_j>0$, we set
\begin{align}
\alpha_q
&=
\frac{\sum_{j\in\mathcal{C}_q}g_j}
{\sum_{j\in\mathcal{C}_q}g_j\tilde m_j},
\label{eq:turn_mass_scale}\\
c_t
&=
\begin{cases}
g_t, & t\in\mathcal{D}_q,\\
\alpha_q g_t\tilde m_t, & t\in\mathcal{C}_q.
\end{cases}
\label{eq:conservative_multiplier}
\end{align}
If the denominator is numerically zero, we use $c_t=g_t$ throughout that turn. The resulting allocation has the following properties.

\begin{proposition}[Invariant mass allocation]
\label{prop:equivariance}
Assume nondegenerate fitted scales and $\eta\in(0,1)$. For fixed $\mathcal{G}$, $\Phi_{\mathcal{G}}$ is monotone in $u$. Consider a positive affine transformation $u'_j=au_j+b$, $a>0$, applied to every score used to fit a calibration group, while holding the trust weights and fallback partition fixed. Then $\tilde m'_j=\tilde m_j$. When the denominator in Equation~\eqref{eq:turn_mass_scale} is nonzero, $\alpha'_q=\alpha_q$, and in either case $c'_t=c_t$. The coefficients also satisfy
\begin{equation}
\sum_{t\in q}c_t
=
\sum_{t\in q}g_t
\qquad
\text{for every }q\in\mathcal{Q}.
\label{eq:exact_turn_mass}
\end{equation}
For $t,t'\in\mathcal{C}_q$ with $g_t=g_{t'}$, $u_t\leq u_{t'}$ implies $c_t\leq c_{t'}$.
\end{proposition}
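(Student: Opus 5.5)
The proof checks four claims in turn: monotonicity of $\Phi_{\mathcal{G}}$, affine invariance of the calibrated scores, invariance of the coefficients, and the mass and ordering identities. Each follows by direct computation from Equations~\eqref{eq:calibration_estimators}--\eqref{eq:conservative_multiplier}.

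\emph{Monotonicity.} Both branches of Equation~\eqref{eq:cdf} are nondecreasing in $u$ when the fitted scales are positive:
\begin{itemize}
\item for $u<\hat\mu_{\mathcal{G}}$, the map $\eta\exp((u-\hat\mu_{\mathcal{G}})/\hat b^{-}_{\mathcal{G}})$ is increasing;
\item for $u\geq\hat\mu_{\mathcal{G}}$, the map $1-(1-\eta)\exp(-(u-\hat\mu_{\mathcal{G}})/\hat b^{+}_{\mathcal{G}})$ is increasing.
\end{itemize}
Across the split, the left branch stays strictly below $\eta$ and the right branch starts at exactly $\eta$, so $\Phi_{\mathcal{G}}$ is monotone on all of $\mathbb{R}$. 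Continuity at $\hat\mu_{\mathcal{G}}$ is not needed; only the ordering across the split matters. The condition $\eta\in(0,1)$ keeps both pieces within $(0,1)$ with the correct orientation.

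\emph{Affine invariance.} Let $u'_j=au_j+b$ with $a>0$.
\begin{itemize}
\item The median is affine-equivariant, so $\hat\mu'=a\hat\mu+b$.
\item The relation $u'_j<\hat\mu'$ holds iff $u_j<\hat\mu$, so the two sides of the split contain the same indices.
\item The deviations transform as $|u'_j-\hat\mu'|=a|u_j-\hat\mu|$, so each one-sided mean scales as $\hat b'^{\pm}=a\hat b^{\pm}$.
\end{itemize}
The standardized quantity $(u'-\hat\mu')/\hat b'^{\pm}$ therefore equals $(u-\hat\mu)/\hat b^{\pm}$, the same branch of Equation~\eqref{eq:cdf} is selected, and $\tilde m'_j=\tilde m_j$.

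The main obstacle is the $\varepsilon$-floor and the empty-side convention, which do not scale with $a$. The hypothesis of ``nondegenerate fitted scales'' is exactly what excludes them. I will state explicitly that I read it as requiring both sides to be nonempty and both means to exceed $\varepsilon$ before and after the transformation, so the floor is inactive. The sparse-group fallback to pooled statistics over $\mathcal{B}$ is handled the same way, because the transformation is applied to every score used in fitting.

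\emph{Coefficient invariance.} The trust weights $g_j$ and the partition $\mathcal{C},\mathcal{D}$ are held fixed, and $\tilde m$ is unchanged. Hence the numerator and denominator of Equation~\eqref{eq:turn_mass_scale} are unchanged.
\begin{itemize}
\item If the denominator is nonzero, then $\alpha'_q=\alpha_q$, and Equation~\eqref{eq:conservative_multiplier} gives $c'_t=c_t$.
\item If it is zero, it is zero in both cases, and both use $c_t=g_t$.
\end{itemize}

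\emph{Mass and ordering.} In the nondegenerate case, split the sum over $q$ into its $\mathcal{D}_q$ and $\mathcal{C}_q$ parts:
\[
\sum_{t\in q}c_t=\sum_{t\in\mathcal{D}_q}g_t+\alpha_q\sum_{t\in\mathcal{C}_q}g_t\tilde m_t=\sum_{t\in\mathcal{D}_q}g_t+\sum_{t\in\mathcal{C}_q}g_t=\sum_{t\in q}g_t .
\]
In the degenerate case the identity is trivial. For ordering, take $t,t'\in\mathcal{C}_q$ with $g_t=g_{t'}$ and $u_t\leq u_{t'}$. Then $c_t=\alpha_q g_t\tilde m_t$ and $c_{t'}=\alpha_q g_t\tilde m_{t'}$ with $\alpha_q g_t\geq0$, so the claim reduces to monotonicity of $\Phi$.

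One caveat applies here. The tokens $t,t'$ share a turn $q$, but calibration groups are indexed by turn position pooled across trajectories. I will argue that every token of a single action turn $q$ lies in the same calibration group $\mathcal{G}$, whether its turn-index group or the pooled fallback for a sparse group. Under that reading, both scores use the same $\Phi_{\mathcal{G}}$, and monotonicity gives $\tilde m_t\leq\tilde m_{t'}$. In the degenerate case $c_t=g_t=g_{t'}=c_{t'}$.
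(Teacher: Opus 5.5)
Your proposal is correct and follows the paper's proof essentially step for step. The monotonicity argument matches: the paper shows both branch derivatives, $\Phi_{\mathcal{G}}/\hat b_-$ and $(1-\Phi_{\mathcal{G}})/\hat b_+$, are positive and that the branches meet at $\eta$, while you use the equivalent observation that left-branch values lie below $\eta$ and right-branch values lie at or above it. The remaining steps are the same as the paper's:
\begin{itemize}
\item affine equivariance of the median and one-sided scales, so the standardized residual is invariant;
\item unchanged sums defining $\alpha_q$, with the degenerate case handled by $c_t=g_t$;
\item the $\mathcal{D}_q$/$\mathcal{C}_q$ split for the turn-mass identity;
\item cancellation of the common factor $\alpha_q$ for the ordering claim.
\end{itemize}
You also make explicit two points the paper leaves implicit, and both are correct: your reading of ``nondegenerate'' (nonempty sides and an inactive $\varepsilon$-floor both before and after the transform), and the observation that all tokens of one action turn share a single calibration map $\Phi_{\mathcal{G}}$.
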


\emph{Proof sketch.}
The two branches of Equation~\eqref{eq:cdf} have positive derivatives and meet at $\eta$, so $\Phi_{\mathcal{G}}$ is monotone. A positive affine transform preserves each score's side of the median. The fitted location and one-sided scales transform as $a\hat\mu_{\mathcal{G}}+b$ and $a\hat b_{\mathcal{G}}^\pm$, leaving $\tilde m_t$ unchanged. With the trust weights and fallback partition fixed, the sums defining $\alpha_q$ are also unchanged. The same is therefore true of $\alpha_q$ and $c_t$. Expanding the two parts of each turn gives Equation~\eqref{eq:exact_turn_mass}, and equal-trust tokens inherit the ordering of $\tilde m_t$. Supplementary Section~A gives the full derivation.

\subsection{Training Objective}
\label{sec:training_objective}

The final actor update uses Equation~\eqref{eq:allocation_form} with the coefficient from Equation~\eqref{eq:conservative_multiplier}. Both $c_t$ and the privileged log probability inside $\hat\ell_t^{\mathrm{KD}}$ are detached, so auxiliary gradients pass only through the student log probability. Equation~\eqref{eq:exact_turn_mass} preserves SDAR's coefficient mass within each action turn, and computing $c_t$ requires no additional model evaluation. Supplementary Section~C reports the measured overhead.

\begin{figure*}[!t]
\centering
\includegraphics[width=\textwidth]{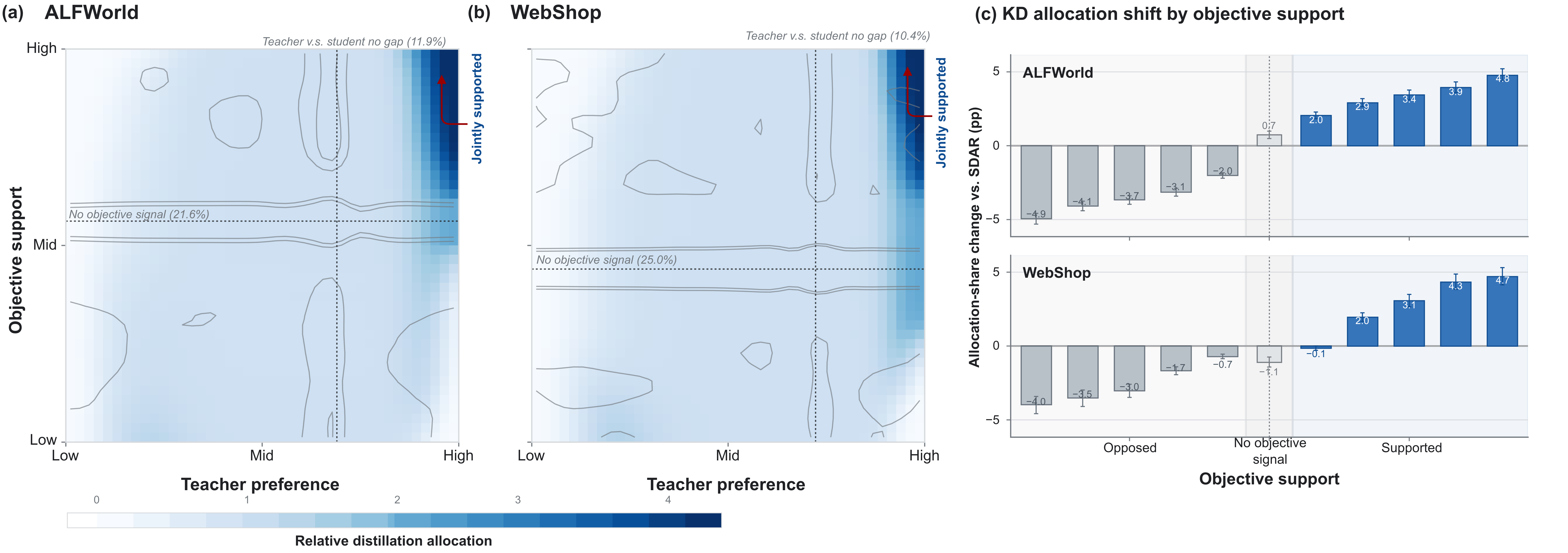}
\caption{Continuous allocation induced by \method{} on frozen GiGPO batches. Panels (a--b) show normalized coefficients over teacher-preference and objective-support ranks; darker blue denotes more mass, contours show token density, and dotted lines mark zero signals. Panel (c) reports the shift relative to SDAR among teacher-supported tokens. Whiskers are 95\% action-turn bootstrap intervals over four batches.}
\label{fig:continuous_joint_allocation}
\end{figure*}

\section{Experiments and Results}
\label{sec:experiments}

\subsection{Setup}
\label{sec:setup}

We evaluate language agents on ALFWorld~\citep{shridhar2021alfworld}, WebShop~\citep{yao2022webshop}, and the seven Search-QA subsets used by Search-R1~\citep{jin2025searchr1}. We use the task splits and metrics of SDAR~\citep{lu2026sdar}. Experiments cover Qwen2.5-Instruct models at 1.5B, 3B, and 7B~\citep{yang2024qwen25}, as well as Qwen3-1.7B and Qwen3-4B~\citep{yang2025qwen3}. We evaluate \method{} with both GRPO~\citep{shao2024deepseekmath} and GiGPO~\citep{feng2025gigpo}.

Within each matched comparison, the model, data, RL algorithm, privileged self-teacher, and inherited SDAR trust allocation are fixed. Published baseline rows retain the values reported by SDAR; the remaining entries come from our training and evaluation logs. Supplementary Table~A1 summarizes the shared SDAR-aligned training protocol.

\begin{table*}[!t]
\centering
\scriptsize
\setlength{\tabcolsep}{2.0pt}
\renewcommand{\arraystretch}{0.80}
\resizebox{\textwidth}{!}{%
\begin{tabular}{lrrrrrrrrrrrrrrrrr}
\toprule
& \multicolumn{7}{c}{ALFWorld} & \multicolumn{8}{c}{Search-QA} & \multicolumn{2}{c}{WebShop} \\
\cmidrule(lr){2-8}\cmidrule(lr){9-16}\cmidrule(lr){17-18}
Method & Pick & Look & Clean & Heat & Cool & Pick2 & Avg & NQ & Triv & Pop & Hotp & 2Wk & MuS & Bam & Avg & Score & Acc. \\
\midrule
\rowcolor{tablegroup}
\multicolumn{18}{l}{\emph{Qwen2.5-1.5B-Instruct}} \\
Skill-SD & 88.9 & 57.1 & 60.0 & 80.0 & 65.0 & 43.5 & 68.0 & 16.7 & 35.4 & 14.8 & 16.7 & 25.2 & 3.9 & 11.2 & 17.7 & 80.6 & 68.8 \\
RLSD & 78.1 & 41.7 & 68.2 & 64.3 & 77.3 & 57.7 & 67.2 & 13.9 & 29.4 & 12.4 & 14.2 & 21.1 & 3.0 & 9.5 & 14.8 & 80.5 & 63.3 \\
\cmidrule(lr){1-18}
GRPO & 84.4 & 41.7 & 77.3 & 50.0 & 68.2 & 50.0 & 65.6 & 16.6 & 34.1 & 15.9 & 16.1 & 24.8 & 3.4 & 58.1 & 24.1 & 72.0 & 60.9 \\
+ OPSD & 90.6 & 33.3 & 77.3 & 57.1 & 77.3 & 61.5 & 71.1 & 16.9 & 34.3 & 16.4 & 16.3 & 24.8 & 3.4 & 56.9 & \underline{24.2} & 84.9 & 73.4 \\
+ SDAR & 87.1 & 64.7 & 75.0 & 66.7 & 50.0 & 80.0 & 72.7 & 16.1 & 32.7 & 15.8 & 15.9 & 24.4 & 3.3 & 4.4 & 16.1 & 81.9 & 64.1 \\
\rowcolor{oidrow}
+ \legacyicsd{} & 87.5 & 71.4 & 81.5 & 62.5 & 75.8 & 52.4 & 73.4 & 18.3 & 39.2 & 17.1 & 18.7 & 28.2 & 4.4 & 68.0 & \textbf{27.7} & 85.6 & \underline{75.8} \\
\cmidrule(lr){1-18}
\gigpo{} & 97.1 & 75.0 & 95.8 & 88.9 & 84.2 & 91.7 & 91.4 & 16.8 & 35.3 & 14.6 & 17.1 & 24.5 & 3.6 & 12.8 & 17.8 & 83.9 & 73.4 \\
+ SDAR & 97.2 & 88.9 & 100.0 & 94.1 & 95.0 & 66.7 & \underline{92.2} & 16.5 & 34.4 & 14.4 & 16.7 & 24.7 & 3.1 & 11.2 & 17.3 & \underline{88.8} & 71.1 \\
\rowcolor{oidrow}
+ \legacyicsd{} & 93.1 & 91.7 & 92.0 & 100.0 & 95.5 & 92.3 & \textbf{93.8} & 17.5 & 36.8 & 14.5 & 17.7 & 24.4 & 4.1 & 12.7 & 18.2 & \textbf{92.2} & \textbf{78.1} \\
\midrule
\rowcolor{tablegroup}
\multicolumn{18}{l}{\emph{Qwen2.5-3B-Instruct}} \\
Skill-SD & 88.2 & 50.0 & 96.2 & 52.4 & 65.0 & 57.9 & 73.4 & 44.4 & 60.4 & 44.0 & 39.5 & 40.4 & 15.4 & 64.9 & 44.1 & 75.9 & 64.0 \\
RLSD & 87.9 & 75.0 & 90.9 & 75.0 & 73.1 & 68.4 & 79.7 & 41.5 & 58.6 & 42.3 & 40.4 & 40.2 & 16.8 & 66.9 & 43.8 & 84.4 & 66.4 \\
\cmidrule(lr){1-18}
GRPO & 91.2 & 62.5 & 96.2 & 61.9 & 65.0 & 47.4 & 75.0 & 39.3 & 60.6 & 41.1 & 37.4 & 34.6 & 15.4 & 26.4 & 36.4 & 79.8 & 63.3 \\
+ OPSD & 100.0 & 82.4 & 85.7 & 75.0 & 70.0 & 60.0 & 81.2 & 44.9 & 61.2 & 45.2 & 40.4 & 38.5 & 16.0 & 66.1 & \textbf{44.6} & 77.8 & 66.4 \\
+ SDAR & 97.1 & 62.5 & 100.0 & 61.9 & 75.0 & 84.2 & 84.4 & 44.8 & 58.1 & 44.3 & 38.6 & 36.2 & 15.7 & 66.1 & \underline{43.4} & 85.0 & 68.0 \\
\cmidrule(lr){1-18}
\gigpo{} & 100.0 & 75.0 & 95.5 & 78.6 & 86.4 & 100.0 & 92.2 & 42.0 & 59.5 & 42.4 & 36.9 & 37.0 & 12.6 & 64.1 & 42.1 & 86.3 & \underline{75.0} \\
+ SDAR & 100.0 & 83.3 & 96.6 & 100.0 & 79.2 & 91.3 & \underline{93.0} & 44.7 & 60.0 & 44.6 & 37.6 & 32.9 & 13.9 & 64.5 & 42.6 & \underline{87.2} & 74.2 \\
\rowcolor{oidrow}
+ \legacyicsd{} & 100.0 & 92.9 & 100.0 & 86.7 & 95.0 & 91.3 & \textbf{95.3} & 43.0 & 60.3 & 43.9 & 38.2 & 40.0 & 13.4 & 64.5 & 43.3 & \textbf{88.7} & \textbf{75.8} \\
\midrule
\rowcolor{tablegroup}
\multicolumn{18}{l}{\emph{Qwen2.5-7B-Instruct}} \\
Skill-SD & 93.9 & 93.8 & 90.9 & 100.0 & 69.2 & 68.4 & 85.1 & 47.1 & 64.5 & 47.8 & 44.2 & 42.1 & 20.2 & 69.0 & 47.8 & 86.1 & 76.5 \\
RLSD & 100.0 & 87.5 & 92.3 & 58.8 & 80.0 & 65.2 & 82.0 & 46.8 & 63.0 & 44.4 & 45.5 & 48.9 & 21.5 & 73.0 & 49.0 & 87.4 & 77.3 \\
\cmidrule(lr){1-18}
GRPO & 91.2 & 87.5 & 96.2 & 81.0 & 65.0 & 57.9 & 81.2 & 45.1 & 63.7 & 44.0 & 43.6 & 43.2 & 16.8 & 37.6 & 42.0 & 80.9 & 72.6 \\
+ OPSD & 91.4 & 61.5 & 100.0 & 87.5 & 76.5 & 52.2 & 80.4 & 47.3 & 64.5 & 46.9 & 43.8 & 39.3 & 18.0 & 69.4 & 47.0 & 86.8 & 76.5 \\
+ SDAR & 94.7 & 75.0 & 100.0 & 86.7 & 68.2 & 78.9 & 85.9 & 46.3 & 63.5 & 48.2 & 43.8 & 48.4 & 19.6 & 73.0 & \underline{49.0} & \underline{89.4} & \underline{82.8} \\
\cmidrule(lr){1-18}
\gigpo{} & 100.0 & 58.3 & 90.9 & 100.0 & 81.8 & 100.0 & 91.4 & 46.4 & 64.7 & 46.1 & 41.6 & 43.6 & 18.9 & 68.9 & 47.2 & 84.0 & 71.1 \\
+ SDAR & 100.0 & 87.0 & 100.0 & 87.5 & 90.9 & 95.2 & \underline{94.5} & 45.1 & 63.8 & 46.3 & 41.8 & 41.8 & 19.4 & 72.2 & 47.2 & 88.4 & 78.9 \\
\rowcolor{oidrow}
+ \legacyicsd{} & 100.0 & 91.7 & 90.9 & 100.0 & 95.5 & 96.2 & \textbf{96.1} & 47.8 & 64.6 & 48.5 & 44.2 & 45.3 & 20.1 & 73.3 & \textbf{49.1} & \textbf{93.1} & \textbf{84.4} \\
\midrule
\rowcolor{tablegroup}
\multicolumn{18}{l}{\emph{Qwen3-1.7B-Instruct}} \\
Skill-SD & 52.9 & 37.5 & 69.2 & 42.9 & 60.0 & 36.8 & 52.3 & 39.1 & 57.5 & 45.4 & 34.8 & 34.1 & 10.7 & 64.1 & 40.8 & \textbf{81.8} & 53.9 \\
RLSD & 50.0 & 37.5 & 61.5 & 19.0 & 50.0 & 21.1 & 42.2 & 38.6 & 57.3 & 43.0 & 34.5 & 34.1 & 11.5 & 65.3 & 40.6 & 74.0 & 50.8 \\
\cmidrule(lr){1-18}
GRPO & 71.1 & 41.7 & 36.4 & 40.0 & 31.8 & 31.6 & 46.1 & 40.0 & 58.9 & 43.5 & 35.4 & 30.3 & 12.0 & 65.7 & 40.8 & 67.3 & 38.3 \\
+ OPSD & 38.2 & 50.0 & 30.8 & 28.6 & 30.0 & 21.1 & 32.0 & 40.7 & 58.9 & 45.0 & 37.0 & 34.6 & 13.3 & 65.7 & \underline{42.2} & 70.7 & 38.3 \\
+ SDAR & 73.5 & 25.0 & 76.9 & 33.3 & 40.0 & 36.8 & 53.9 & 39.7 & 58.9 & 45.3 & 35.9 & 35.5 & 12.6 & 65.3 & 41.9 & 76.8 & 58.6 \\
\cmidrule(lr){1-18}
\gigpo{} & 85.3 & 50.0 & 96.2 & 66.7 & 75.0 & 68.4 & \underline{78.1} & 43.1 & 58.4 & 44.8 & 33.8 & 28.9 & 10.9 & 62.9 & 40.4 & 79.9 & 60.2 \\
+ SDAR & 72.7 & 62.5 & 90.9 & 58.3 & 84.6 & 68.4 & 75.0 & 42.1 & 59.1 & 46.7 & 34.8 & 29.7 & 11.0 & 63.7 & 41.0 & 78.4 & \underline{65.6} \\
\rowcolor{oidrow}
+ \legacyicsd{} & 94.1 & 62.5 & 100.0 & 71.4 & 85.0 & 57.9 & \textbf{82.8} & 43.6 & 59.7 & 47.0 & 36.3 & 31.3 & 13.2 & 66.1 & \textbf{42.5} & \underline{81.5} & \textbf{68.0} \\
\bottomrule
\end{tabular}%
}
\caption{Main results grouped by policy optimizer. Skill-SD and RLSD are hybrid distillation baselines; family-head rows are the RL baselines, ``+'' rows add auxiliary allocators, and blue rows denote \method{}. All values are percentages. Within each model block, bold and underline mark the best and second-best summary results; task-level columns are unmarked.}
\label{tab:main_sdar_aligned}
\end{table*}

\begin{table}[tb]
\centering
\footnotesize
\setlength{\tabcolsep}{4.0pt}
\begin{tabular}{lcccc}
\toprule
 & ALFWorld & Search-QA & \multicolumn{2}{c}{WebShop} \\
\cmidrule(lr){2-2}\cmidrule(lr){3-3}\cmidrule(lr){4-5}
Method & Avg & Avg & Score & Acc. \\
\midrule
\rowcolor{tablegroup}
\multicolumn{5}{l}{\emph{Qwen2.5-1.5B-Instruct}} \\
Vanilla & 5.5 & 12.1 & 17.8 & 5.5 \\
Skill-Prompt* & 6.2 & 7.7 & 20.8 & 1.6 \\
OPSD & 14.1 & 0.0 & 22.3 & 10.2 \\
\rowcolor{oidrow}
\gigpo{}+\legacyicsd{} & \textbf{93.8} & \textbf{18.2} & \textbf{92.2} & \textbf{78.1} \\
\midrule
\rowcolor{tablegroup}
\multicolumn{5}{l}{\emph{Qwen2.5-3B-Instruct}} \\
Vanilla & 21.9 & 31.7 & 6.7 & 0.8 \\
Skill-Prompt* & 28.9 & 23.9 & 0.2 & 0.8 \\
OPSD & 28.1 & 0.0 & 11.3 & 3.1 \\
Skill-GRPO & 60.2 & 34.1 & 77.3 & 60.9 \\
Skill-GRPO* & 80.5 & 36.1 & 76.3 & 66.4 \\
\rowcolor{oidrow}
\gigpo{}+\legacyicsd{} & \textbf{95.3} & \textbf{43.3} & \textbf{88.7} & \textbf{75.8} \\
\midrule
\rowcolor{tablegroup}
\multicolumn{5}{l}{\emph{Qwen2.5-7B-Instruct}} \\
Vanilla & 12.5 & 33.9 & 5.9 & 1.6 \\
Skill-Prompt* & 23.4 & 36.4 & 1.7 & 0.8 \\
OPSD & 32.8 & 6.2 & 4.5 & 2.3 \\
Skill-GRPO & 69.5 & 40.3 & 80.4 & 71.9 \\
Skill-GRPO* & 88.3 & 47.5 & 87.0 & 81.2 \\
\rowcolor{oidrow}
\gigpo{}+\legacyicsd{} & \textbf{96.1} & \textbf{49.1} & \textbf{93.1} & \textbf{84.4} \\
\midrule
\rowcolor{tablegroup}
\multicolumn{5}{l}{\emph{Qwen3-1.7B-Instruct}} \\
Vanilla & 12.5 & 24.8 & 46.5 & 4.7 \\
Skill-Prompt* & 9.4 & 24.3 & 23.0 & 2.3 \\
OPSD & 14.1 & 5.8 & 47.4 & 9.3 \\
Skill-GRPO & 21.1 & 40.4 & 73.4 & 46.1 \\
Skill-GRPO* & 28.1 & 40.7 & 80.4 & 50.0 \\
\rowcolor{oidrow}
\gigpo{}+\legacyicsd{} & \textbf{82.8} & \textbf{42.5} & \textbf{81.5} & \textbf{68.0} \\
\bottomrule
\end{tabular}
\caption{Summary results for extended baselines. Starred methods retain retrieved skills during validation. Supplementary Table~A2 reports per-task and per-subset results.}
\label{tab:extended_baselines}
\end{table}

\subsection{Main Results}

Table~\ref{tab:main_sdar_aligned} provides the matched comparison. Replacing SDAR's trust-only allocation with \method{} improves every summary metric in the GiGPO blocks, and the 1.5B GRPO control shows the same ordering. The Qwen2.5-7B result illustrates the scale of the difference: ALFWorld success rises from 94.5 to 96.1, while WebShop score/accuracy moves from 88.4/78.9 to 93.1/84.4.

Table~\ref{tab:extended_baselines} broadens the comparison to Vanilla and Skill-Prompt, stand-alone OPSD~\citep{zhao2026opsd}, and the Skill-GRPO variants used by SDAR~\citep{lu2026sdar}. \gigpo{}+\method{} gives the strongest ALFWorld and Search-QA averages and the highest WebShop accuracy. Skill-Prompt* and Skill-GRPO* retain retrieved skills during evaluation, whereas \method{} requires no privileged context at test time.

\subsection{Ablations and Analysis}

\paragraph{Across policy optimizers.}
To test whether the benefit depends on the policy optimizer, we hold the model and task settings fixed at Qwen2.5-1.5B and compare the full training trajectories under GRPO and GiGPO (Supplementary Figure~A3). \method{} outperforms the matched SDAR allocator on ALFWorld and WebShop with either optimizer. The gap is largest on WebShop accuracy: $+11.7$ points with GRPO and $+7.0$ with GiGPO. The result is therefore not specific to GiGPO's step-aware advantage construction.

\paragraph{Qwen3 model family.}
Table~\ref{tab:qwen3_transfer} repeats the matched comparison on Qwen3-4B-Instruct~\citep{yang2025qwen3}. With GiGPO and the concise non-thinking action interface fixed, \method{} improves average ALFWorld success from 89.8 to 95.3 over SDAR. This 5.5-point gain complements the Qwen2.5 results without changing the allocation rule.

\begin{table}[tb]
\centering
\scriptsize
\setlength{\tabcolsep}{3.4pt}
\begin{tabular}{lrrrrrrr}
\toprule
Method & Pick & Look & Clean & Heat & Cool & Pick2 & Avg \\
\midrule
\gigpo{}+SDAR & 96.9 & 66.7 & \textbf{100.0} & 78.6 & 81.8 & \textbf{96.2} & 89.8 \\
\rowcolor{oidrow}
\gigpo{}+\legacyicsd{} & \textbf{97.6} & \textbf{91.7} & \textbf{100.0} & \textbf{88.9} & \textbf{100.0} & 83.3 & \textbf{95.3} \\
\bottomrule
\end{tabular}
\caption{Qwen3-4B ALFWorld results with GiGPO and the concise non-thinking interface.}
\label{tab:qwen3_transfer}
\end{table}
\begin{table}[tb]
\centering
\scriptsize
\setlength{\tabcolsep}{3.0pt}
\begin{tabular}{lrrrrrrr}
\toprule
Method & Pick & Look & Clean & Heat & Cool & Pick2 & Avg \\
\midrule
\gigpo{} & 97.1 & 75.0 & 95.8 & 88.9 & 84.2 & 91.7 & 91.4 \\
+ SDAR & 97.2 & 88.9 & \textbf{100.0} & 94.1 & 95.0 & 66.7 & 92.2 \\
+ Fisher magnitude & \textbf{100.0} & 78.6 & 95.0 & \textbf{100.0} & 90.0 & 78.3 & 91.4 \\
+ Sign-only & 97.1 & \textbf{100.0} & 90.9 & \textbf{100.0} & \textbf{95.5} & 77.3 & 92.2 \\
+ Influence only & 93.9 & 88.9 & 92.3 & \textbf{100.0} & \textbf{95.5} & 83.3 & 93.0 \\
\rowcolor{oidrow}
+ \legacyicsd{} & 93.1 & \textbf{91.7} & 92.0 & \textbf{100.0} & \textbf{95.5} & \textbf{92.3} & \textbf{93.8} \\
\bottomrule
\end{tabular}
\caption{Allocation ablation on ALFWorld with Qwen2.5-1.5B and GiGPO.}
\label{tab:ablation}
\end{table}

\begin{figure}[tb]
\centering
\includegraphics[width=\columnwidth]{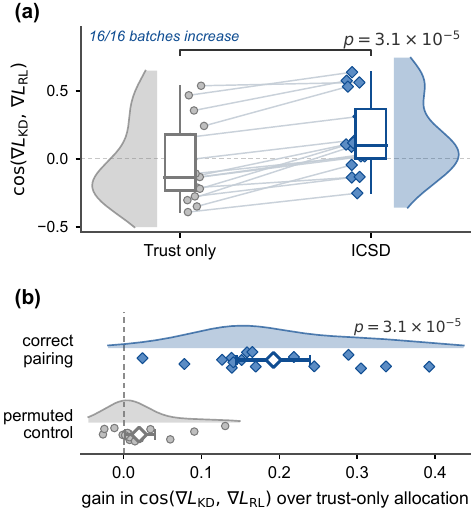}
\caption{\method{} improves local update compatibility on sixteen frozen batches. (a) SDAR and \method{} cosine compatibility with the RL gradient. (b) Compatibility after tokenwise permutation of the \method{} multipliers. Diamonds and horizontal bars show means and 95\% batch-bootstrap intervals.}
\label{fig:gradient_alignment_flow}
\end{figure}

\paragraph{Allocation ablation.}
Table~\ref{tab:ablation} separates three parts of the allocation rule: signed objective relevance, teacher trust, and continuous influence magnitude. The Fisher-magnitude control passes an unsigned policy-gradient magnitude through the same CDF calibration; the teacher gap appears only in the inherited SDAR trust weight. Its 91.4\% average matches GiGPO, showing that sensitivity magnitude alone is not enough. Influence-only allocation instead retains the signed score but removes token-level trust, raising the average to 93.0\%. Combining both signals gives the full 93.8\% result.

The sign-only control replaces the continuous calibrated score with a hard positive-or-negative decision while retaining the fallback and exact turn-wise mass matching. It reaches 92.2\%, tying SDAR but remaining 1.6 points below \method{}. Thus, the gain is not reproduced by unsigned sensitivity or hard sign filtering alone; the strongest result comes from combining teacher trust with graded, signed objective relevance. Supplementary Figures~A1--A2 visualize fallback behavior and selected validation trajectories.

\paragraph{Sensitivity.}
Figure~\ref{fig:icsd_sensitivity} motivates the settings used in our main experiments. The central CDF split and $\lambda=0.01$ give the strongest late-stage performance in the tested ranges, and asymmetric Laplace calibration finishes above the Gaussian alternative. Continuous signed influence also reaches a higher peak and late-window mean than magnitude-only or sign-only allocation.

\begin{figure}[tb]
\centering
\includegraphics[width=\columnwidth]{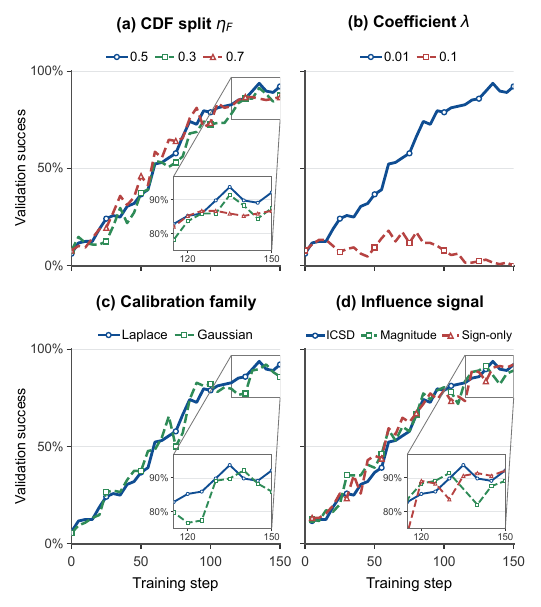}
\caption{Training dynamics for sensitivity and signal ablations on ALFWorld with Qwen2.5-1.5B and GiGPO. Insets enlarge steps 115--150.}
\label{fig:icsd_sensitivity}
\end{figure}

\paragraph{Mechanism analysis.}
For nonnegative token weights $w_t$, trusted-conflict mass (TCM) is the fraction of active teacher-supported mass ($\Delta_t>0$, $|u_t|>\varepsilon$) assigned to objective-opposed tokens ($u_t<0$). The exact estimator and four-region breakdown are provided in Supplementary Section~D and Figure~A1. Across four matched ALFWorld batches, TCM falls from $60.1\%$ under SDAR to $37.8\%$ under \method{}, a 22.3-point reduction (95\% CI $[22.1,22.6]$). Figure~\ref{fig:continuous_joint_allocation} shows the same redistribution continuously: strongly trusted and supported regions receive $2.90\times$ and $2.64\times$ mass on ALFWorld and WebShop, while trusted but opposed ALFWorld tokens receive $0.79\times$.

Across sixteen frozen batches, \method{} raises cosine compatibility with the RL gradient by $0.192$ (95\% CI $[0.147,0.240]$), whereas permuting the same multipliers leaves only a $0.020$ gain (Figure~\ref{fig:gradient_alignment_flow}). The compatibility gain is positive in all sixteen batches; Supplementary Section~D reports the paired bootstrap analysis. The improvement therefore comes from token assignment rather than total mass.

\section{Conclusion}

\method{} addresses the trust--utility mismatch in on-policy self-distillation by reallocating trusted token supervision according to teacher-directed objective influence. Batch-adaptive calibration, conservative fallback, and exact action-turn mass matching keep the RL objective and auxiliary budget unchanged. Across three benchmarks, two optimizers, and five model configurations, \method{} consistently improves over trust-only allocation; frozen-batch analyses show that it shifts mass toward objective-supported corrections and improves local update compatibility.

\bibliography{references}

\appendix
\setcounter{secnumdepth}{2}
\setcounter{figure}{0}
\renewcommand{\thefigure}{A\arabic{figure}}
\setcounter{table}{0}
\renewcommand{\thetable}{A\arabic{table}}
\setcounter{algorithm}{0}
\renewcommand{\thealgorithm}{A\arabic{algorithm}}
\setcounter{equation}{0}
\renewcommand{\theequation}{A\arabic{equation}}

\section{Derivations and Proofs}
\label{app:proofs}

This section gives the explicit Taylor remainder for objective influence and proves the affine-invariance statement from the main paper.

\paragraph{Why not parameter-space influence.}
For a training item $t$, classical influence has the form $-\nabla_\theta J^\top H^{-1}\nabla_\theta\ell_t$, where $H$ is the parameter Hessian~\citep{koh2017influence}. Computing it for every supervised token requires per-token parameter gradients and an inverse-curvature solve, usually approximated with repeated Hessian--vector products. This cost would be incurred inside every actor update. The objective influence score used by \method{} instead differentiates a scalar output coordinate using quantities already produced by the RL update.

\subsection{Taylor Remainder for Objective Influence}

Fix a token $t$ and abbreviate $A=\widehat A_t$, $\rho=\rho_t>0$, and $\Delta=\Delta_t$; all three are constants of the current iterate, and $\Delta$ is detached by construction. The perturbed surrogate contribution is $J(\epsilon)=A\rho\,e^{\epsilon\Delta}$, which is smooth in $\epsilon$. Differentiating at zero gives
\begin{equation}
J'(0)=A\rho\Delta=u_t ,
\end{equation}
which is exactly the objective influence score $u_t$ defined in the main paper. Applying Taylor's theorem with the Lagrange remainder to $e^{x}$ at $x=\epsilon\Delta$, there exists $\xi$ between $0$ and $\epsilon\Delta$ with $e^{\epsilon\Delta}=1+\epsilon\Delta+\tfrac{1}{2}(\epsilon\Delta)^2e^{\xi}$. Substituting,
\begin{equation}
J(\epsilon)-J(0)-\epsilon u_t
=
\frac{A\rho\Delta^2}{2}\,\epsilon^2 e^{\xi},
\end{equation}
and hence, bounding $e^{\xi}$ by $e^{|\epsilon\Delta|}$,
\begin{equation}
\bigl|J(\epsilon)-J(0)-\epsilon u_t\bigr|
\le
\frac{|A|\rho\Delta^2}{2}\,\epsilon^2 e^{|\epsilon\Delta|}.
\label{eq:remainder_bound}
\end{equation}
The $\mathcal{O}(\epsilon^2)$ term in the main paper's first-order expansion therefore carries the explicit constant $\tfrac{1}{2}|A|\rho\Delta^2e^{|\epsilon\Delta|}$, and no approximation enters the first-order coefficient itself.

\subsection{Monotonicity and Affine Invariance}

For nondegenerate $\hat b_-$ and $\hat b_+$, differentiating the two branches of the asymmetric Laplace CDF defined in the main paper gives
\begin{equation}
\frac{\mathrm d\Phi_{\mathcal{G}}}{\mathrm du}
=
\begin{cases}
\Phi_{\mathcal{G}}(u)/\hat b_-, & u<\hat\mu,\\
\bigl(1-\Phi_{\mathcal{G}}(u)\bigr)/\hat b_+, & u\geq\hat\mu.
\end{cases}
\end{equation}
Both derivatives are positive, and the branches meet at $\Phi_{\mathcal{G}}(\hat\mu)=\eta$. The map is therefore continuous and strictly increasing.

Write $\mathcal{J}_{-}=\{j:u_j<\hat\mu\}$ and $\mathcal{J}_{+}=\{j:u_j\geq\hat\mu\}$. Let $u'_j=au_j+b$ with $a>0$. The median is affine equivariant, so $\hat\mu'=a\hat\mu+b$. The side assignment is preserved because $u'_j<\hat\mu'$ if and only if $u_j<\hat\mu$. Hence, $\mathcal{J}'_{\pm}=\mathcal{J}_{\pm}$. Each one-sided scale transforms as
\begin{equation}
\hat b'_{\pm}
=\operatorname{mean}_{j\in\mathcal{J}_{\pm}}\bigl|u'_j-\hat\mu'\bigr|
=a\hat b_{\pm},
\end{equation}
because $|u'_j-\hat\mu'|=a|u_j-\hat\mu|$. The standardized residual is therefore invariant:
\begin{equation}
\frac{u'-\hat\mu'}{\hat b'_{\sigma(u')}}
=
\frac{a(u-\hat\mu)}{a\,\hat b_{\sigma(u)}}
=
\frac{u-\hat\mu}{\hat b_{\sigma(u)}},
\end{equation}
where $\sigma(u)$ selects the side of the median. Substitution into the asymmetric Laplace CDF gives $\Phi'_{\mathcal{G}}(u')=\Phi_{\mathcal{G}}(u)$ and hence $\tilde m'_t=\tilde m_t$. With $g_t$, $\mathcal{C}_q$, and $\mathcal{D}_q$ fixed, both sums in the turnwise scale are unchanged. It follows that $\alpha'_q=\alpha_q$ whenever the denominator is nonzero and $c'_t=c_t$. If the denominator is zero, both constructions use $c_t=g_t$. The scale floor is only a numerical safeguard. A binding floor is the degenerate case excluded by the affine-invariance statement.

\subsection{Turn-Level Conservation and Ordering}

Fix an action turn $q$ with $\sum_{j\in\mathcal{C}_q}g_j\tilde m_j>0$. By the turnwise scale and conservative multiplier defined in the main paper,
\begin{equation}
\begin{aligned}
\sum_{t\in q}c_t
&=
\sum_{t\in\mathcal{D}_q}g_t
+
\alpha_q\sum_{t\in\mathcal{C}_q}g_t\tilde m_t\\
&=
\sum_{t\in\mathcal{D}_q}g_t
+
\sum_{t\in\mathcal{C}_q}g_t
=
\sum_{t\in q}g_t ,
\end{aligned}
\end{equation}
which is the exact turn-mass identity stated in the main paper. In the degenerate case, the rule sets $c_t=g_t$ throughout the turn, so the identity holds directly.

Summing over all turns gives the global coefficient-mass identity
\begin{equation}
\sum_{t\in\mathcal{B}}c_t=\sum_{t\in\mathcal{B}}g_t
\qquad\text{for every minibatch,}
\label{eq:global_budget}
\end{equation}
so \method{} does not increase the total coefficient budget relative to the inherited trust allocation.

For two tokens $t,t'\in\mathcal{C}_q$ in the same turn,
\begin{equation}
\frac{c_t}{c_{t'}}
=
\frac{g_t\tilde m_t}{g_{t'}\tilde m_{t'}},
\end{equation}
because the common factor $\alpha_q$ cancels. At equal trust, the within-turn allocation therefore orders tokens by $\tilde m_t=\Phi_{\mathcal{G}}(u_t)$, which is monotone in $u_t$. In particular, opposing signs of $\widehat A_t$ and $\Delta_t$ give $u_t<0$ and a lower calibrated score than positive-influence items. The shared positive factor $\alpha_q$ preserves this ordering, though it does not require every negative-influence coefficient to be smaller than its base trust coefficient. This is the ordering used in the main paper's mechanism-analysis figure.

\section{Idealized Natural-Gradient Interpretation}
\label{app:interpretation}

This section relates the exact output-coordinate score $u_t$ to an idealized parameter-space update. The argument motivates its sign; it is not part of the deployed algorithm.

\paragraph{Setting.}
Let $s_t=\nabla_\theta\log\pi_\theta(a_t\mid x_t)$ be the sampled-token score and let $F$ denote the Fisher information of the policy on visited states, the local metric underlying trust-region policy optimization~\citep{schulman2015trpo}. Consider the idealized teacher-directed step
\begin{equation}
\delta\theta
=
\eta F^{-1}\Delta_t s_t,
\qquad
\eta>0.
\label{eq:app_step}
\end{equation}
Assume that the unclipped surrogate is locally active at $t$, $F$ is positive definite on the span of $s_t$, and cross-token score correlations under $F^{-1}$ are neglected. A first-order expansion of the token's own contribution $J_t=\widehat A_t\rho_t$ gives
\begin{equation}
J_t(\theta+\delta\theta)-J_t(\theta)
=
\eta\kappa_tu_t+o(\eta),
\qquad
\kappa_t=s_t^\top F^{-1}s_t\geq0.
\label{eq:app_expansion}
\end{equation}
Thus, whenever $\kappa_t>0$, the same-token local contribution changes with the sign of $u_t$. The leverage $\kappa_t$ is token dependent, so this argument establishes local sign compatibility rather than preservation of the full score ranking. It neither models cross-token interactions nor guarantees finite-horizon return improvement.

\paragraph{Relation to the fallback.}
The sign interpretation inherits the support restriction of the teacher-directed intervention defined in the main paper. When $\Delta_t<0$, the analyzed intervention and the sampled-token auxiliary update point in different directions. If $\widehat A_t<0$ as well, their product makes $u_t$ positive and can falsely resemble supportive evidence. The conservative multiplier avoids this ambiguity by retaining the base trust coefficient in that double-negative region. Opposing-sign cases still yield $u_t<0$ and remain useful as relative conflict scores.

\section{Implementation Details}
\label{app:implementation}

\subsection{Training Protocol}

\paragraph{Protocol details.}
We use the SDAR data splits, SkillBank, keyword-matching retrieval, and privileged self-teacher. GRPO and GiGPO retain their original advantage construction; the GiGPO runs use $\gamma=0.95$ and unit step-advantage weight. The ICSD-specific trust and influence split parameters are 0.4 and 0.5, with a scale floor of $10^{-4}$ and a minimum calibration group of eight tokens. Sparse groups use minibatch statistics before reverting to the fixed monotone map. Model-dependent micro-batches and tensor parallelism are chosen to fit the corresponding model sizes.

\begin{table}[t]
\centering
\scriptsize
\setlength{\tabcolsep}{2.2pt}
\renewcommand{\arraystretch}{1.03}
\begin{tabular}{@{}lccc@{}}
\toprule
Setting & ALFWorld & WebShop & Search-QA \\
\midrule
Train / validation batch & 16 / 128 & 16 / 128 & 128 / 512 \\
Rollouts per task & 8 & 8 & 8 \\
Maximum interaction steps & 50 & 15 & 4 \\
Maximum prompt / response length & 2048 / 512 & 4096 / 512 & 4096 / 512 \\
Actor mini-batch size & 256 & 64 & 256 \\
Actor learning rate & $1\times10^{-6}$ & $1\times10^{-6}$ & $1\times10^{-6}$ \\
Ratio clip & 0.2 & 0.2 & 0.2 \\
KL coefficient & 0.01 & 0.01 & 0.001 \\
Distillation coefficient $\lambda$ & 0.01 & 0.01 & 0.01 \\
ICSD training updates & 150 & 150 & 200 \\
\bottomrule
\end{tabular}
\caption{SDAR-aligned protocol used for ICSD training.}
\label{tab:training_configuration}
\end{table}

\subsection{Runtime and Approximation Audit}

\paragraph{Compute overhead.}
\method{} adds no forward or backward pass; its cost is limited to detached elementwise arithmetic and CDF fitting inside the actor update. On matched Qwen2.5-1.5B WebShop runs using four H100 GPUs per method, \method{} increases per-token actor-update time by 1.7\% over the trust-only SDAR allocator. Since actor updates account for roughly one fifth of step time, this corresponds to about 0.4\% normalized end-to-end overhead.

\paragraph{Clipping exposure.}
The influence score is defined on the unclipped surrogate term, while the deployed loss clips the policy ratio. Across the four frozen ALFWorld evidence batches (544{,}427 valid response tokens), 0.128\% of tokens fall outside the clip interval and 0.069\% lie where the clipped surrogate is locally flat, so that $u_t$ overstates the local response. These tokens receive 0.103\% of the composite coefficient mass and 0.984\% of the absolute weighted distillation mass $c_t|\Delta_t|$; no token reaches the negative-advantage dual-clip cap. The flat-token fraction is stable across batches (0.058--0.075\%), and the pooled ratio distribution concentrates near one (mean 1.0006, standard deviation 0.0254). These statistics describe the analyzed step-135 batches rather than every training stage; within them, the unclipped approximation misprices only a negligible portion of the auxiliary update.

\subsection{Allocator Implementation}

\paragraph{Trust and influence calibration.}
The inherited trust role $g_t=G_{\mathrm{trust}}(\Delta_t)$ is instantiated as a fitted asymmetric location--scale CDF of the sampled gap. Raw influence values inherit the heavy tails and drifting scale of the advantage, policy ratio, and teacher--student gap. Their supported and opposed sides can also have different spreads. The ICSD-specific map $\Phi_{\mathcal{G}}$ therefore uses the same distribution family with a robust location estimate and separate one-sided scales. Each scale is floored away from zero. Fits use valid response tokens from the current calibration group. Undersized or degenerate groups fall back first to global minibatch statistics and then to a fixed monotone map. The statistics are detached and recomputed for every minibatch. We fit the CDF before applying the fallback rule. The remaining coefficients are then rescaled within each action turn so that their composite mass, together with the unchanged fallback coefficients, exactly equals the turn's base trust mass. Algorithm~\ref{alg:icsd} summarizes the update.

\begin{algorithm}[tbp]
\caption{\method{} update for one on-policy minibatch}
\label{alg:icsd}
\begin{algorithmic}[1]
\REQUIRE minibatch $\mathcal{B}$ of valid response tokens grouped into action turns; distillation weight $\lambda$; scale floor $\varepsilon$
\STATE Roll out $\pi_\theta$; compute $\widehat A_t$, $\rho_t$, and the sampled student log probabilities.
\STATE Evaluate the stop-gradient privileged branch $\pi_\theta^{+}$ on the same contexts with input $z_t$; compute $\Delta_t$ and the base trust coefficient $g_t=G_{\mathrm{trust}}(\Delta_t)$.
\STATE Compute the detached influence $u_t=\widehat A_t\rho_t\Delta_t$.
\STATE Fit $(\hat\mu,\hat b_{-},\hat b_{+})$ on the calibration group by the median and one-sided means floored at $\varepsilon$; set $\tilde m_t=\Phi_{\mathcal{G}}(u_t)$. Undersized groups fall back to minibatch statistics.
\STATE Form $\mathcal{D}=\{t:\widehat A_t<0,\ \Delta_t<0\}$ and $\mathcal{C}=\mathcal{B}\setminus\mathcal{D}$.
\FOR{each action turn $q$}
\IF{$\sum_{j\in\mathcal{C}_q}g_j\tilde m_j>\varepsilon$}
\STATE $\alpha_q\leftarrow\bigl(\sum_{j\in\mathcal{C}_q}g_j\bigr)\big/\bigl(\sum_{j\in\mathcal{C}_q}g_j\tilde m_j\bigr)$
\STATE $c_t\leftarrow g_t$ on $\mathcal{D}_q$; \ $c_t\leftarrow\alpha_q\,g_t\tilde m_t$ on $\mathcal{C}_q$
\ELSE
\STATE $c_t\leftarrow g_t$ for all $t\in q$ \COMMENT{degenerate turn: retain base trust}
\ENDIF
\ENDFOR
\STATE Detach $\{c_t\}$ and update $\theta$ with the allocation objective.
\ENSURE $\sum_{t\in q}c_t=\sum_{t\in q}g_t$ for every action turn
\end{algorithmic}
\end{algorithm}

\section{Full Signal-Combination Audit}
\label{app:signal_audit}

Figure~\ref{fig:quadrant_allocation_appendix} compares base trust with the deployed \method{} allocation in the four nonzero sign regions. The row labels report the signs of the sampled teacher--student gap $\Delta_t$ and the policy advantage $\widehat A_t$.

\paragraph{Trusted-conflict mass.}
For nonnegative token weights $w_t$, define
\begin{equation}
\operatorname{TCM}_{w}(\mathcal{B})
=
\frac{
\sum_{t\in\mathcal{B}}
w_t\,\mathbf{1}[\Delta_t>0,\;|u_t|>\varepsilon,\;u_t<0]
}{
\sum_{t\in\mathcal{B}}
w_t\,\mathbf{1}[\Delta_t>0,\;|u_t|>\varepsilon]
}.
\label{eq:tcm}
\end{equation}
Here $\varepsilon$ removes zero-advantage tokens, $w_t=g_t$ for SDAR, and $w_t=c_t$ for \method{}. This normalized share measures allocation rather than raw coefficient scale. Across four matched ALFWorld batches, TCM falls from $60.1\%$ under SDAR to $37.8\%$ under \method{}. The paired reduction is 22.3 points (95\% action-turn bootstrap CI $[22.1,22.6]$) and appears in every batch.

\paragraph{Gradient compatibility.}
Across sixteen frozen batches, \method{} increases cosine compatibility with the RL gradient by $0.192$ on average (95\% batch-bootstrap CI $[0.147,0.240]$), with a positive change in every batch. Permuting the same multipliers across tokens preserves their distribution and coefficient mass but leaves only a $0.020$ gain. The pairing-specific difference is $0.172$ (95\% CI $[0.131,0.217]$), tying the geometric improvement to token assignment rather than aggregate scale.

\begin{figure}[t]
\centering
\includegraphics[width=\columnwidth]{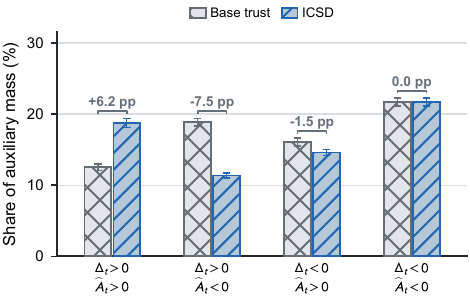}
\caption{Deployed allocation across the four sign regions. \method{} moves auxiliary mass toward jointly supported tokens and away from sign-conflicting regions, while leaving the double-negative region unchanged. Bars and whiskers show mass shares and 95\% action-turn bootstrap intervals over four ALFWorld batches (544{,}427 tokens); brackets give point differences from base trust.}
\label{fig:quadrant_allocation_appendix}
\end{figure}

Across the frozen batches, \method{} assigns 6.2 percentage points more mass to $\Delta_t>0,\widehat A_t>0$, 7.5 points less to $\Delta_t>0,\widehat A_t<0$, and 1.5 points less to $\Delta_t<0,\widehat A_t>0$. When both signals are negative, the deployed coefficient remains equal to base trust, so the difference is zero rather than a spurious positive shift.

\section{Extended Learning Curves}

Figure~\ref{fig:allocation_ablation} reports the component variants used in the main paper's ablation table.

\begin{figure*}[t]
\centering
\includegraphics[width=0.62\textwidth]{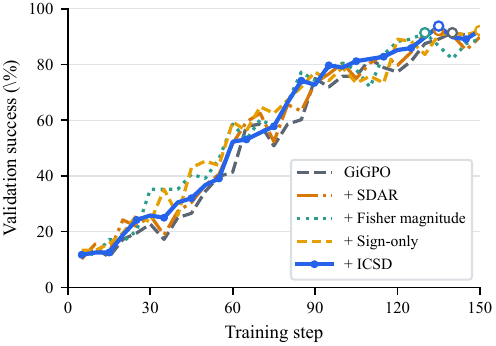}
\caption{Validation curves for matched ALFWorld allocation variants with Qwen2.5-1.5B, GiGPO, and 150 training updates under an identical training configuration.}
\label{fig:allocation_ablation}
\end{figure*}

Figure~\ref{fig:onepointfive_dynamics} adds a temporal view to the frozen-batch mechanism audit. The allocation signal is noisy at the minibatch level, especially under GRPO, yet the five-step trends remain below the corresponding SDAR traces for most of training. The difference widens over the final 30 steps under GRPO (46.6\% versus 56.1\%) and narrows under GiGPO (47.8\% versus 49.2\%).

\begin{figure*}[t]
\centering
\includegraphics[width=0.92\textwidth]{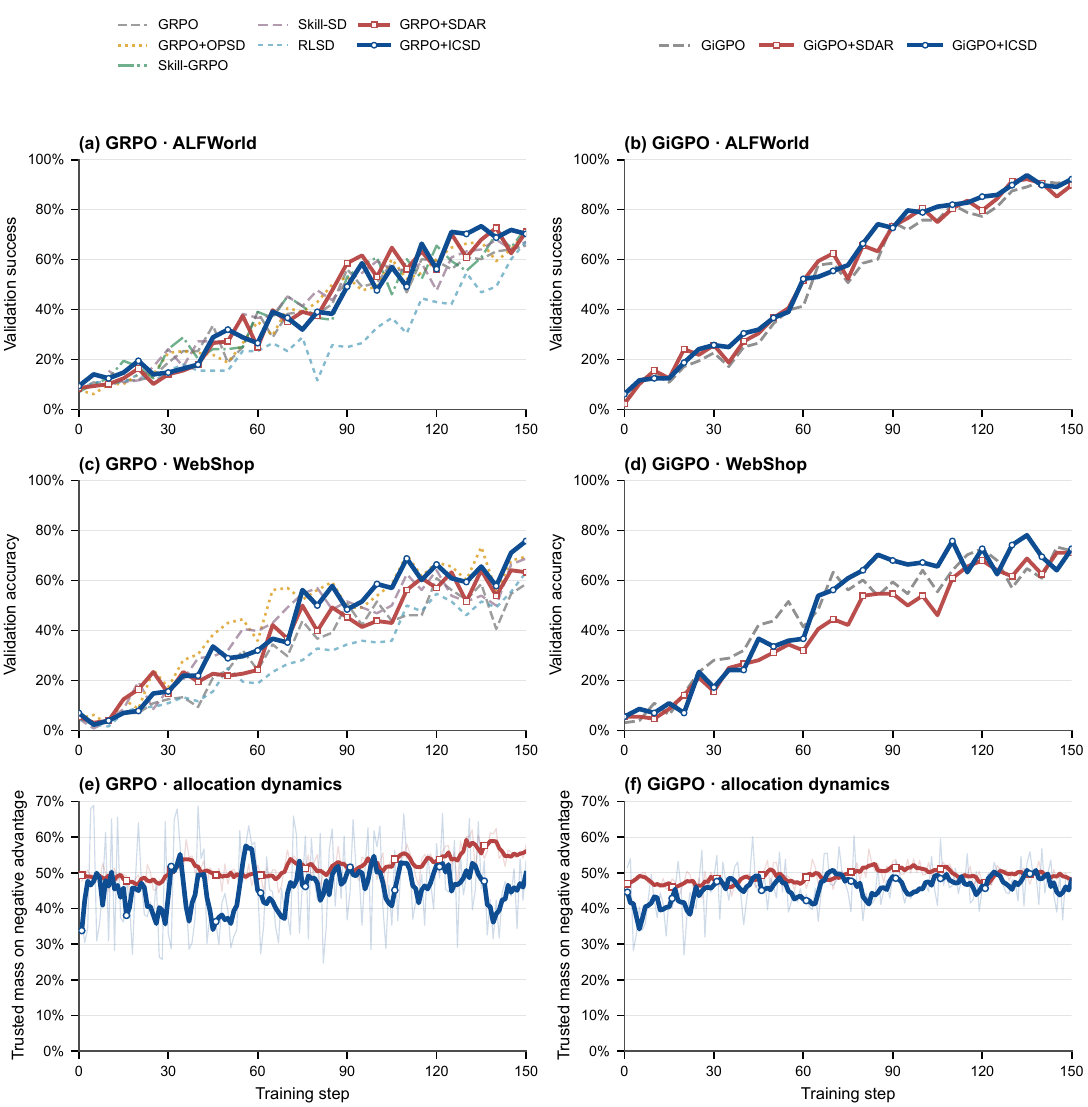}
\caption{Training dynamics at Qwen2.5-1.5B. Panels (a--b) show ALFWorld, panels (c--d) show WebShop, and panels (e--f) show the share of trusted auxiliary mass assigned to negative-advantage tokens. Pale lines are per-step values and dark lines are centered five-step means. This share describes allocation, not token utility.}
\label{fig:onepointfive_dynamics}
\end{figure*}

\section{Continuous-Landscape Construction Details}
\label{app:continuous_joint_allocation}

For panels (a--b) of the main paper's mechanism-analysis figure, let $c_t$ denote the deployed coefficient and let $q_T$ and $q_R$ be the empirical mid-ranks of $\Delta_t$ and $\widehat A_t$ within each environment. For a joint rank bin $\mathcal{B}_{ij}$, the plotted color is the locally smoothed normalized allocation intensity
\begin{equation}
\widehat I_{ij}
=
\frac{
|\mathcal{B}_{ij}|^{-1}
\sum_{t\in\mathcal{B}_{ij}} c_t
}{
|\mathcal{B}|^{-1}
\sum_{t\in\mathcal{B}} c_t
}.
\label{eq:normalized_allocation_intensity}
\end{equation}
Thus, $\widehat I_{ij}=1$ denotes the environment-average deployed allocation. The gray contours show token density and do not enter the allocator; the rank-transformed axes expose dependence between the two signals rather than their raw scale.

Panel (c) directly measures redistribution relative to the base trust allocation inside the teacher-supported set $\mathcal{S}^{+}=\{t:\Delta_t>0\}$. Negative and positive advantages are ranked separately and divided into five equal-frequency bins on each side of zero; exact zeros remain a separate neutral point and are never jittered into an artificial order. For a bin $\mathcal{K}_k$, the vertical axis is
\begin{equation}
\Delta s_k
=
100\left(
\frac{\sum_{t\in\mathcal{K}_k}c_t}
     {\sum_{t\in\mathcal{S}^{+}}c_t}
-
\frac{\sum_{t\in\mathcal{K}_k}g_t}
     {\sum_{t\in\mathcal{S}^{+}}g_t}
\right),
\label{eq:conditional_share_shift}
\end{equation}
in percentage points per bin. Confidence intervals use action-turn bootstrap resampling over the four evidence batches. Since $\mathcal{S}^{+}$ excludes $\mathcal{D}$, Equation~\eqref{eq:conditional_share_shift} describes the deployed allocation exactly within this conditional normalization.

\begin{table*}[t]
\centering
\scriptsize
\setlength{\tabcolsep}{2.0pt}
\renewcommand{\arraystretch}{0.96}
\resizebox{\textwidth}{!}{%
\begin{tabular}{llrrrrrrrrrrrrrrrrr}
\toprule
& & \multicolumn{7}{c}{ALFWorld} & \multicolumn{8}{c}{Search-QA} & \multicolumn{2}{c}{WebShop} \\
\cmidrule(lr){3-9}\cmidrule(lr){10-17}\cmidrule(lr){18-19}
Family & Method & Pick & Look & Clean & Heat & Cool & Pick2 & Avg & NQ & Triv & Pop & Hotp & 2Wk & MuS & Bam & Avg & Score & Acc. \\
\midrule
\rowcolor{tablegroup}
\multicolumn{19}{l}{\emph{Qwen2.5-1.5B-Instruct}} \\
 & Vanilla & 11.1 & 0.0 & 6.2 & 0.0 & 0.0 & 4.2 & 5.5 & 10.5 & 25.4 & 17.8 & 8.3 & 14.8 & 2.2 & 6.0 & 12.1 & 17.8 & 5.5 \\
 & Skill-Prompt* & 3.4 & 16.7 & 12.9 & 5.3 & 0.0 & 5.0 & 6.2 & 6.2 & 18.1 & 8.7 & 5.4 & 10.1 & 1.5 & 4.0 & 7.7 & 20.8 & 1.6 \\
\cellcolor{white}\multirow{-3}{*}{\textbf{No RL}} & OPSD & 26.3 & 16.7 & 9.1 & 6.7 & 9.1 & 5.3 & 14.1 & 0.0 & 0.0 & 0.0 & 0.1 & 0.2 & 0.0 & 0.0 & 0.0 & 22.3 & 10.2 \\
\cmidrule(lr){2-19}
 & Skill-SD & \underline{88.9} & \underline{57.1} & 60.0 & \underline{80.0} & 65.0 & 43.5 & \underline{68.0} & 16.7 & 35.4 & 14.8 & 16.7 & \underline{25.2} & 3.9 & 11.2 & 17.7 & \underline{80.6} & \underline{68.8} \\
\multirow{-2}{*}{\textbf{Hybrid}} & RLSD & 78.1 & 41.7 & \underline{68.2} & 64.3 & \underline{77.3} & \underline{57.7} & 67.2 & 13.9 & 29.4 & 12.4 & 14.2 & 21.1 & 3.0 & 9.5 & 14.8 & 80.5 & 63.3 \\
\rowcolor{oidrow}
\textbf{Ours} & \gigpo{}+\legacyicsd{} & \textbf{93.1} & \textbf{91.7} & \textbf{92.0} & \textbf{100.0} & \textbf{95.5} & \textbf{92.3} & \textbf{93.8} & \textbf{17.5} & \textbf{36.8} & \textbf{14.5} & \textbf{17.7} & \textbf{24.4} & \textbf{4.1} & \textbf{12.7} & \textbf{18.2} & \textbf{92.2} & \textbf{78.1} \\
\midrule
\rowcolor{tablegroup}
\multicolumn{19}{l}{\emph{Qwen2.5-3B-Instruct}} \\
 & Vanilla & 44.4 & 11.1 & 6.2 & 15.4 & 28.6 & 12.5 & 21.9 & 24.6 & 48.1 & 31.0 & 26.3 & 25.3 & 7.2 & 59.7 & 31.7 & 6.7 & 0.8 \\
 & Skill-Prompt* & 51.7 & 66.7 & 48.4 & 0.0 & 4.3 & 10.0 & 28.9 & 23.7 & 46.2 & 30.6 & 24.4 & 22.1 & 7.5 & 12.5 & 23.9 & 0.2 & 0.8 \\
\cellcolor{white}\multirow{-3}{*}{\textbf{No RL}} & OPSD & 48.8 & 41.7 & 16.7 & 0.0 & 15.8 & 16.7 & 28.1 & 0.1 & 0.1 & 0.1 & 0.0 & 0.0 & 0.0 & 0.0 & 0.0 & 11.3 & 3.1 \\
\cmidrule(lr){2-19}
 & Skill-GRPO & 88.9 & 71.4 & 58.8 & 70.6 & 40.7 & 29.2 & 60.2 & 43.5 & 58.8 & 43.0 & 36.8 & 32.2 & 11.7 & 12.5 & 34.1 & 77.3 & 60.9 \\
\multirow{-2}{*}{\textbf{Skill/RL}} & Skill-GRPO* & \underline{94.3} & 57.1 & \underline{100.0} & 66.7 & 73.1 & 57.1 & \underline{80.5} & \underline{44.3} & 59.6 & \textbf{44.3} & 39.0 & 36.1 & 14.5 & 14.9 & 36.1 & 76.3 & 66.4 \\
 & Skill-SD & 88.2 & 50.0 & 96.2 & 52.4 & 65.0 & 57.9 & 73.4 & \textbf{44.4} & \textbf{60.4} & \underline{44.0} & \underline{39.5} & \textbf{40.4} & \underline{15.4} & \underline{64.9} & \textbf{44.1} & 75.9 & 64.0 \\
\multirow{-2}{*}{\textbf{Hybrid}} & RLSD & 87.9 & \underline{75.0} & 90.9 & \underline{75.0} & \underline{73.1} & \underline{68.4} & 79.7 & 41.5 & 58.6 & 42.3 & \textbf{40.4} & \underline{40.2} & \textbf{16.8} & \textbf{66.9} & \underline{43.8} & \underline{84.4} & \underline{66.4} \\
\rowcolor{oidrow}
\textbf{Ours} & \gigpo{}+\legacyicsd{} & \textbf{100.0} & \textbf{92.9} & \textbf{100.0} & \textbf{86.7} & \textbf{95.0} & \textbf{91.3} & \textbf{95.3} & 43.0 & \underline{60.3} & 43.9 & 38.2 & 40.0 & 13.4 & 64.5 & 43.3 & \textbf{88.7} & \textbf{75.8} \\
\midrule
\rowcolor{tablegroup}
\multicolumn{19}{l}{\emph{Qwen2.5-7B-Instruct}} \\
 & Vanilla & 36.1 & 22.2 & 3.1 & 0.0 & 0.0 & 0.0 & 12.5 & 25.2 & 50.8 & 29.5 & 29.0 & 29.0 & 10.4 & 63.7 & 33.9 & 5.9 & 1.6 \\
 & Skill-Prompt* & 51.7 & 50.0 & 32.3 & 5.3 & 4.3 & 0.0 & 23.4 & 30.9 & 52.1 & 32.7 & 32.7 & 27.9 & 12.7 & 66.1 & 36.4 & 1.7 & 0.8 \\
\cellcolor{white}\multirow{-3}{*}{\textbf{No RL}} & OPSD & 50.0 & 60.0 & 22.7 & 21.4 & 17.6 & 9.5 & 32.8 & 8.8 & 8.6 & 17.5 & 2.5 & 4.2 & 0.5 & 1.2 & 6.2 & 4.5 & 2.3 \\
\cmidrule(lr){2-19}
 & Skill-GRPO & 88.5 & 66.7 & 65.2 & 61.1 & 57.7 & 73.1 & 69.5 & 45.2 & 63.7 & 45.7 & 43.1 & 43.3 & 19.6 & 21.4 & 40.3 & 80.4 & 71.9 \\
\multirow{-2}{*}{\textbf{Skill/RL}} & Skill-GRPO* & 100.0 & 83.3 & \textbf{96.4} & 83.3 & 75.0 & \underline{78.9} & \underline{88.3} & 44.8 & 63.0 & 45.1 & 43.7 & 43.7 & \underline{20.5} & 71.4 & 47.5 & 87.0 & \underline{81.2} \\
 & Skill-SD & 93.9 & \textbf{93.8} & 90.9 & \underline{100.0} & 69.2 & 68.4 & 85.1 & \underline{47.1} & \underline{64.5} & \underline{47.8} & 44.2 & 42.1 & 20.2 & 69.0 & 47.8 & 86.1 & 76.5 \\
\multirow{-2}{*}{\textbf{Hybrid}} & RLSD & \underline{100.0} & 87.5 & \underline{92.3} & 58.8 & \underline{80.0} & 65.2 & 82.0 & 46.8 & 63.0 & 44.4 & \textbf{45.5} & \textbf{48.9} & \textbf{21.5} & \underline{73.0} & \underline{49.0} & \underline{87.4} & 77.3 \\
\rowcolor{oidrow}
\textbf{Ours} & \gigpo{}+\legacyicsd{} & \textbf{100.0} & \underline{91.7} & 90.9 & \textbf{100.0} & \textbf{95.5} & \textbf{96.2} & \textbf{96.1} & \textbf{47.8} & \textbf{64.6} & \textbf{48.5} & \underline{44.2} & \underline{45.3} & 20.1 & \textbf{73.3} & \textbf{49.1} & \textbf{93.1} & \textbf{84.4} \\
\midrule
\rowcolor{tablegroup}
\multicolumn{19}{l}{\emph{Qwen3-1.7B-Instruct}} \\
 & Vanilla & 25.0 & 22.2 & 3.1 & 0.0 & 21.4 & 4.2 & 12.5 & 29.4 & 46.9 & 37.0 & 23.5 & 19.6 & 6.4 & 10.5 & 24.8 & 46.5 & 4.7 \\
 & Skill-Prompt* & 10.3 & 50.0 & 16.1 & 0.0 & 0.0 & 5.0 & 9.4 & 29.4 & 46.5 & 36.2 & 22.9 & 20.8 & 4.3 & 10.1 & 24.3 & 23.0 & 2.3 \\
\cellcolor{white}\multirow{-3}{*}{\textbf{No RL}} & OPSD & 26.3 & 33.3 & 9.1 & 0.0 & 4.5 & 5.3 & 14.1 & 4.2 & 8.3 & 4.6 & 6.6 & 15.3 & 0.7 & 1.2 & 5.8 & 47.4 & 9.3 \\
\cmidrule(lr){2-19}
 & Skill-GRPO & 27.6 & \underline{54.5} & 22.7 & 27.3 & 0.0 & 19.2 & 21.1 & \underline{39.2} & \underline{58.6} & 43.9 & 35.2 & 28.2 & 11.5 & \underline{66.1} & 40.4 & 73.4 & 46.1 \\
\multirow{-2}{*}{\textbf{Skill/RL}} & Skill-GRPO* & 31.4 & 42.9 & 51.9 & 8.3 & 41.5 & 7.1 & 28.1 & 38.0 & 58.4 & 43.9 & \textbf{36.3} & 29.0 & \underline{12.5} & \textbf{66.9} & 40.7 & 80.4 & 50.0 \\
 & Skill-SD & \underline{52.9} & 37.5 & \underline{69.2} & \underline{42.9} & \underline{60.0} & \underline{36.8} & \underline{52.3} & 39.1 & 57.5 & \underline{45.4} & 34.8 & \textbf{34.1} & 10.7 & 64.1 & \underline{40.8} & \textbf{81.8} & \underline{53.9} \\
\multirow{-2}{*}{\textbf{Hybrid}} & RLSD & 50.0 & 37.5 & 61.5 & 19.0 & 50.0 & 21.1 & 42.2 & 38.6 & 57.3 & 43.0 & 34.5 & \textbf{34.1} & 11.5 & 65.3 & 40.6 & 74.0 & 50.8 \\
\rowcolor{oidrow}
\textbf{Ours} & \gigpo{}+\legacyicsd{} & \textbf{94.1} & \textbf{62.5} & \textbf{100.0} & \textbf{71.4} & \textbf{85.0} & \textbf{57.9} & \textbf{82.8} & \textbf{43.6} & \textbf{59.7} & \textbf{47.0} & \textbf{36.3} & \underline{31.3} & \textbf{13.2} & \underline{66.1} & \textbf{42.5} & \underline{81.5} & \textbf{68.0} \\
\bottomrule
\end{tabular}%
}
\caption{Full per-task ALFWorld, per-subset Search-QA, and WebShop results corresponding to the aggregate table in the main paper. Starred methods retain retrieved skills during validation.}
\label{tab:extended_baselines_full}
\end{table*}

\end{document}